\let\appendix\artappendix
\newcommand{\BlackBox}{\rule{1.5ex}{1.5ex}}
\newenvironment{proof}{\par\noindent{\bfseries\upshape
  Proof\ }}{\hfill\BlackBox\\[2mm]}
\newtheorem{theorem}{Theorem}
\newtheorem{proposition}[theorem]{Proposition}
\newtheorem{definition}[theorem]{Definition}
\definecolor{newcolor}{rgb}{.8,.349,.1}
\begin{document}

\title{Combinatorial optimization for low bit-width neural networks}

\author{
Han Zhou \thanks{Corresponding author: han.zhou@esat.kuleuven.be. We acknowledge the Research Foundation - Flanders (FWO) through project number G0A1319N, and the Flemish Government (AI Research Program). HZ is supported by the China Scholarship Council.} $^1$~~~
Aida Ashrafi $^2$~~~
Matthew B. Blaschko $^1$~~~
\smallskip 
\\
$^1$ \textit{Dept. ESAT, Center for Processing Speech and Images,  KU Leuven, Belgium}
\\
$^2$ \emph{University of Bergen, Norway}
}

\maketitle
\thispagestyle{empty}

\begin{abstract}
Low-bit width neural networks have been extensively explored for deployment on edge devices to reduce computational resources. Existing approaches have focused on gradient-based optimization in a two-stage train-and-compress setting or as a combined optimization where gradients are quantized during training. Such schemes require high-performance hardware during the training phase and usually store an equivalent number of full-precision weights apart from the quantized weights. In this paper, we explore methods of direct combinatorial optimization in the problem of risk minimization with binary weights, which can be made equivalent to a non-monotone submodular maximization under certain conditions. We employ an approximation algorithm for the cases with single and multilayer neural networks. For linear models, it has $\mathcal{O}(nd)$ time complexity where $n$ is the sample size and $d$ is the data dimension. We show that a combination of greedy coordinate descent and this novel approach can attain competitive accuracy on binary classification tasks.
\end{abstract}

\Section{Introduction}
\label{sec1}
 The emergence of binarized neural networks (BNNs) has made it possible to train DNNs for cost- and power-limited devices such as cars, smartphones, tablets, TVs, etc. Especially when both activations and network weights are binary, the multiply-accumulate operations in BNNs can be implemented via XNOR-popcount operations~\cite{rastegariECCV16}, which greatly reduces the cost of BNN inference. Methods proposed to date either strongly rely on naive binarization over pre-trained full precision models via stochastic gradient descent (SGD)~\cite{courbariaux2015binaryconnect} or make improvements in forward binarization and backward propagation~\cite{cai2017deep,ding2019regularizing}. However, there is still a fundamental gap in the literature in making discrete optimization applicable to BNNs. 

Current methods are extremely efficient at test time, but the training procedure still requires high computing resources.  For low power mobile, embedded, or medical devices, access to GPU resources may be limited or absent.  In order for such devices to learn local models, low resource training procedures must also be developed. Fully combinatorial optimization inherently uses low-cost integer/bit operations that can be optimized for deployment on resource limited hardware.

In this paper, we focus on a fully combinatorial optimization for the models where weights are binary or low-bit, and analyze the problem from the perspective of submodular analysis, a discrete analogue of convex analysis that leads to efficient optimization algorithms with global approximation guarantees. Specifically, we find that under certain conditions, risk minimization with discretized weights is equivalent to a non-monotone supermodular minimization problem.
Our contributions are summarized as follows:
\begin{inparaenum}[(i)]
    \item We present two learning strategies: greedy coordinate descent (GCD) and randomized supermodular minimization (RSM) for low-bit width single-layer neural networks on binary classification tasks. These can be further extended to multi-layer neural networks under certain circumstances. 
    \item Our experiments on the MNIST and CIFAR10 datasets show that both strategies have their pros and cons. RSM comes with approximation guarantees and is more computationally efficient than GCD, but empirically the latter can obtain higher accuracy in some settings. A hybrid of these two approaches provides a controllable trade-off between model performance and time complexity.
\end{inparaenum}

\subsection{Related Work}
\noindent \textbf{Submodular Maximization:} Combinatorial optimization is the process of searching for the maxima (or minima) of an objective function whose domain is a discrete space. Among these discrete optimization scenarios, submodular functions are attractive for a wide variety of problems because of their diminishing return property, which yields polynomial time algorithms with optimal approximation rates. In contrast with monotone functions, non-monotone functions fit well to the underlying mechanics of real-world systems. For the maximization of non-monotone submodular functions, \cite{feige2011maximizing} has proved that for the unconstrained setting, it is NP-hard to obtain better than a $\frac{1}{2}$-approximation in the value oracle sense. \cite{6375344} provides a randomized algorithm that achieves exactly this rate. \cite{gillenwater2014maximization} summarizes the recent developments in the maximization of non-monotone submodular functions. Similar to our work, \cite{9186137} has applied submodular optimization with cardinality constraints to feature selection on regression tasks.

\noindent
\textbf{Low-bit width neural networks:} Substantial research efforts have been devoted in recent years to optimizing BNNs as measured by latency and energy-efficiency \cite{Chang}. \cite{qin2020binary} has summarized binarization methods into two categories: \begin{inparaenum}[(i)] \item naive BNNs, which are based on quantization over the weights and activations, \item optimization-based approaches, e.g.\ minimizing the quantization error, improving the network loss function, and reducing the gradient error\end{inparaenum}. Additionally, there have been several attempts to ternarize the weights e.g.\ $\{+\alpha, 0 , -\alpha\}$ \cite{lin2015neural,li2016ternary}.

Our work operates in the same solution space of binary, ternary, and higher fixed bit width weights, but approaches the problem from the perspective of combinatorial optimization for the first time.

\section{Problem Statement} \label{section: network_model} 
We begin by presenting the network model and summarizing notation used in this paper. For the binary classification task, we consider the following bias-free two-layer fully connected feedforward neural network $f:\mathbb{R}^{d} \rightarrow \mathbb{R}$ with d-dimensional input $x\in \mathbb{R}^{d}$ and $h$ hidden units 
\begin{equation}\small
    f(x) = a \sigma(W^Tx)= \sum_{j=1}^{h} a_j \sigma(\langle  w_j, x \rangle) \label{eq:2_layer_NN}
\end{equation}
where $w_j \in \{\alpha, \beta\}^{d}$ is the weights corresponding to neuron $j$, $a_j \in \mathbb{R}$ is the $j$-th coefficient of the second layer, and $\sigma(\cdot)$ is the ReLU function. 
\begin{figure}[htb]
\vspace{-0.2cm}
\centering
\includegraphics[width=0.35\textwidth]{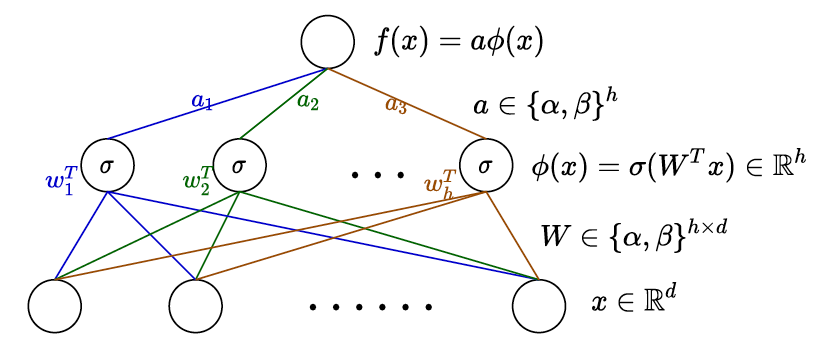}
\caption{Bias-free two-layer network}
\label{fig:2LayerNetwork}
\vspace{-0.5cm}
\end{figure}
For concise notation, we write $a = [a_1, \cdots, a_h] \in \mathbb{R}^{h}$ for the coefficients of the second layer, $W = [w_1, \cdots, w_h] \in \{\alpha, \beta\}^{h\times d}$ for the weight matrix, $X = [ x_1, \cdots, x_n]$ for the data matrix, $y\in \{-1,+1\}^{n}$ for the labels.

The approximation of the desired prediction function $f: \mathcal{X} \to \mathcal{Y}$ $(\mathbb{R}^{d} \to \mathbb{R})$ is made based on a training set of $n$ observations $D_n = \{(x_i,y_i)\}_{i =1}^n$ that are assumed to satisfy the independent and identically distributed (i.i.d.) sampling assumption \cite{vapnik2013nature}. The weights of this model are learned by minimizing the empirical risk of the available data:
\begin{equation}\small \label{eq:empiricalRisk} 
    \mathcal{R}_{emp}(f):=\frac{1}{n} \sum_{i=1}^{n} \ell \left(y_{i}, f\left(x_{i}\right)\right)
\end{equation}

where $\ell(y,f(x))$ is the loss incurred by outputting $f(x)$ when $y$ is the true label. Typically, this loss is most often convex in its second argument i.e. squared error or logistic loss, etc, and it can be viewed as a set function where we map from a set indicating which elements of $w$ are equal to $\alpha$, to the empirical risk of the weights represented by that set. This is a key observation that allows us to analyze the problem from the perspective of submodular analysis. We are interested in the underlying structure of this empirical risk such that it can be solved efficiently via combinatorial algorithms. 
\section{Method}
We begin this section with a brief introduction to submodularity and convexity, and then give our main theorems and outline the general arguments. We first consider a single-layer neural network and introduce the theorems and algorithms for learning this model. 

\subsection{Submodularity and Convexity}
Submodular functions play an important role in combinatorial optimization when minimizing (or maximizing) a set function defined on the power set $\mathcal{P}(V)$, similar to convex functions on continuous spaces.
\begin{definition}[Submodularity] \label{DefinitionSubmodularity}
A set function $f: \mathcal{P}(V) \rightarrow \mathbb{R}$ is submodular if and only if for all $B \subseteq A \subset V$ and $x \in V \backslash B,$ it holds
\begin{align}\small
f(B \cup\{x\})-f(B) \geq f(A \cup\{x\})-f(A) .
\end{align}
\end{definition}
A function $f$ is supermodular if $-f$ is submodular.
\begin{definition}[Convexity] \label{DefinitionConvexity}
A function $g : \mathbb{R}^p \rightarrow \mathbb{R}$ is convex if for all $x_1,x_2 \in \mathbb{R}^p$
\begin{align*}\small
    g\left(\frac{1}{2} (x_1+x_2) \right) \leq \frac{1}{2} \left( g\left(x_1\right) + g\left(x_2\right) \right)  .
\end{align*}
\end{definition}

\subsection{Learning Linear Functions}
 We can express the output of the single-layer network as $f(x) = \langle w, x\rangle$, where $w$ is in the discrete domain $\{\alpha, \beta \}^{d}$ (without loss of generality, we may assume that $\alpha< \beta$). Let $\Omega=\{1,\cdots,d\}$ and $S=\{i: w_i =\beta \}$ indicate the entries of $w$ that equal $\beta$, the loss function $L(S):2^{\Omega}\rightarrow \mathbb{R}$
\begin{equation}\small
\begin{aligned}
L(S) =\sum_{i=1}^n \log\left(1+\exp\left(-y_i \langle w, x_i\rangle\right)\right)=:g(w) \label{eq:logistic_reg} 
\end{aligned} 
\end{equation}
is a set fuction of $S$ where $\ell(\cdot)$ is taken as logistic loss. Finding $\min_{S \subseteq \Omega} L(S)$ for this set function is a NP-hard optimization problem since it generalizes max-cut.

\subsubsection{Greedy Coordinate Descent}
 Inspired by \cite{khuller1999budgeted}, we present a naive greedy coordinate descent (GCD) procedure to learn single-layer binary neural networks via coordinate descent (Algorithm ~\ref{algForwardGreedy}, supplementary material). Essentially, it starts with a weight vector $w^0= \{\alpha,\beta\}^d$ corresponding to a set $S_0$, and greedily makes change to its coordinate one by one by comparing their feed-forward evaluation loss on the training set, and stops after iterating through all the coordinates of $w$. Noticing that the change in $g(w)$ upon changing $w[i]$ from $\alpha$ to $\beta$ is actually is incurred by $\langle \Delta w, x\rangle = (\beta-\alpha)x[i]=\mathcal{O}(1)$ where $w[i]$ represents the $i$th entry of $w$. Therefore, this algorithm can be employed in the minimum time complexity $\mathcal{O}(nd)$ even in the absence of any underlying structure of the objective function, yet it does not necessarily provide any theoretical guarantees.

\subsubsection{Supermodular Minimization}

\begin{proposition} \label{prop:1}
Given an input vector $x\in \mathbb{R} ^d$, if $g : \mathbb{R} \rightarrow \mathbb{R}$ is convex and $x \geq \mathbf{0}$ or $x \leq \mathbf{0}$ element wise, then $g(\langle w, x\rangle)$ is supermodular for $w \in \{0,1\}^d$. 
\end{proposition}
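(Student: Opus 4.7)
The plan is to recast supermodularity in its marginal-gain form and then reduce Proposition \ref{prop:1} to a standard monotonicity-of-slopes characterization of convex functions. Identifying $w \in \{0,1\}^d$ with the subset $S := \{i : w_i = 1\} \subseteq \Omega := \{1,\ldots,d\}$ and defining $F(S) := g(\langle w_S, x\rangle) = g\bigl(\sum_{j \in S} x_j\bigr)$, I would note that supermodularity of $F$ is, by negation of Definition \ref{DefinitionSubmodularity}, equivalent to
$$F(A \cup \{i\}) - F(A) \;\geq\; F(B \cup \{i\}) - F(B)$$
for every $B \subseteq A \subseteq \Omega$ and every $i \in \Omega \setminus A$. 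Writing $s_A := \sum_{j \in A} x_j$, $s_B := \sum_{j \in B} x_j$, and $\delta := x_i$, this target inequality becomes
$$g(s_A + \delta) - g(s_A) \;\geq\; g(s_B + \delta) - g(s_B). \qquad (\star)$$

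Next I would make the sign-alignment observation: because $s_A - s_B = \sum_{j \in A \setminus B} x_j$, the hypothesis $x \geq \mathbf{0}$ forces both $s_A - s_B \geq 0$ and $\delta \geq 0$, while $x \leq \mathbf{0}$ forces both $\leq 0$. In either case the product $(s_A - s_B)\,\delta$ is nonnegative. So, up to swapping the roles of $B$ and $A$ and of $+\delta$ with a negative increment, it suffices to establish $(\star)$ under $s_B \leq s_A$ and $\delta \geq 0$.

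Finally I would invoke the standard equivalent characterization of convexity on $\mathbb{R}$: $g$ is convex if and only if, for every fixed $h > 0$, the discrete slope $t \mapsto [g(t+h) - g(t)]/h$ is nondecreasing in $t$; equivalently, the quantity $g(v+h) - g(v) - g(u+h) + g(u)$ has the same sign as $h(v-u)$. Applied with $u = s_B$, $v = s_A$, and $h = \delta$ (in the $x \geq \mathbf{0}$ case), and analogously with $h = -\delta \geq 0$ after rewriting $g(s_A + \delta) - g(s_A) = g(s_A) - g(s_A - h)$ etc.\ (in the $x \leq \mathbf{0}$ case), this immediately yields $(\star)$.

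No step is a genuine obstacle; the only place to slip is the sign bookkeeping in the $x \leq \mathbf{0}$ branch, but framing the conclusion through the uniform product $h(v - u)$ collapses both hypotheses into a single appeal to convexity. I would also briefly note that the proposition fails without the sign assumption on $x$: if $x$ has mixed signs, then $(s_A - s_B)\,\delta$ can be negative and the convexity characterization runs the wrong way, which explains why the hypothesis cannot simply be dropped.
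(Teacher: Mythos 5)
Your proof is correct, but it takes a genuinely different route from the paper's. The paper proves Proposition~\ref{prop:1} by citation: it observes that $s(A)=\langle w,x\rangle$ is a nonnegative modular function when $x\geq\mathbf{0}$, invokes Bach's result that a concave function of a nonnegative modular function is submodular, applies it to $-g$ (concave since $g$ is convex), and handles $x\leq\mathbf{0}$ by replacing $s$ with $-s$. You instead verify the marginal-gain inequality $g(s_A+\delta)-g(s_A)\geq g(s_B+\delta)-g(s_B)$ from first principles, reducing it to the increasing-increments characterization of convexity, i.e.\ that $g(v+h)-g(v)-g(u+h)+g(u)\geq 0$ whenever $h(v-u)\geq 0$. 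Both arguments are sound; your sign bookkeeping in the $x\leq\mathbf{0}$ branch checks out, since there $s_A-s_B\leq 0$ and $\delta\leq 0$ so the product is again nonnegative. What your version buys is self-containedness and transparency: it isolates exactly where the sign hypothesis enters (the nonnegativity of $(s_A-s_B)\,\delta$), which also directly explains the paper's subsequent remark that the claim fails for mixed-sign $x$ --- something the black-box citation obscures. What the paper's version buys is brevity and a connection to the general composition machinery (concave of modular) that it reuses in Propositions~\ref{prop:mixedX} and~\ref{prop:alphabeta}. One small note: you correctly quantify the increment over $i\in\Omega\setminus A$, which is the standard (and intended) form of Definition~\ref{DefinitionSubmodularity}.
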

\vspace{-0.1cm}
\begin{proof}
We have $s(A) = \langle w, x\rangle$ with a set $A=\{i: w_i\neq 0\}$. Hence, if $x \geq \mathbf{0}$, we have $s(A)\in\mathbb{R}_{+}$ for all set $A$. From  \cite[Proposition~6.1]{bach2011learning}, if $s\in\mathbb{R}_{+}^{p}$, and $g : \mathbb{R}_{+} \rightarrow \mathbb{R}$ is a concave function, then $F:A\mapsto g(s(A))$ is submodular.  Hence, for $x \geq \mathbf{0}$, $F:A \mapsto -g(s(A)) $ is submodular, thus the set function $g(\langle w, x\rangle)$ is supermodular. In the case of $x \leq \mathbf{0}$, take $s(A) = -\langle w, x\rangle \in \mathbb{R}_{+}$ and using the first result, we have that the set function $g(\langle w, x\rangle)$ is supermodular.
\end{proof}
\noindent Moreover, one can show that if $x\in \mathbb{R}^d$ is mixed (its entries contain both positive and negative values), then $g(\langle w, x\rangle)$ is neither submodular nor supermodular in $w$. The following proposition shows that, one can find a convex upper bound of the linear classifier $f(x) = \langle w, x\rangle $ that is supermodular in its weight vector $w \in \{0,1\}^d$. For a vector $x\in \mathbb{R}^d$, we define
$ \operatorname{pos}(x) := \max(0,x)$ where the $\max$ is applied element wise.
We also define $\operatorname{neg}(x) = - \operatorname{pos}(-x).$
\vspace{-0.1cm}
\begin{proposition} \label{prop:mixedX}
Consider a simple linear classifier $f(x) = \langle w, x\rangle $ with a convex surrogate $\ell$ that is an upper bound on the 0-1 loss: $\ell(y\langle w, x\rangle) \geq [y\langle w, x\rangle \leq 0]$ ($[\cdot]$ being Iverson bracket notation), then
\begin{align*}\small
    \frac{1}{2}\left( \ell(y\langle w, \operatorname{pos}(x)\rangle) + \ell(y\langle w,\operatorname{neg}(x)\rangle) \right) \geq [y\langle w, x\rangle \leq 0] 
\end{align*}
and the left hand side is supermodular in $w$.
\end{proposition}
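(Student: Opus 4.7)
The plan is to split the statement into its two assertions and dispatch each with a short application of the machinery already in place.

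For the inequality, the key observation is the decomposition $x = \operatorname{pos}(x) + \operatorname{neg}(x)$, which entails $y\langle w, x\rangle = y\langle w, \operatorname{pos}(x)\rangle + y\langle w, \operatorname{neg}(x)\rangle$. I then apply convexity of $\ell$ at the midpoint (Jensen's inequality) to obtain
\begin{equation*}\small
\ell\!\left(\tfrac{1}{2}\,y\langle w, x\rangle\right) \leq \tfrac{1}{2}\!\left(\ell(y\langle w, \operatorname{pos}(x)\rangle) + \ell(y\langle w, \operatorname{neg}(x)\rangle)\right).
\end{equation*}
Because $\ell$ dominates the 0-1 loss pointwise, $\ell(\tfrac{1}{2}y\langle w, x\rangle) \geq [\tfrac{1}{2}y\langle w, x\rangle \leq 0]$, and since the Iverson bracket depends only on the sign of its argument, $[\tfrac{1}{2}y\langle w, x\rangle \leq 0] = [y\langle w, x\rangle \leq 0]$. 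Chaining these two inequalities yields the required bound.

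For supermodularity, the plan is to invoke Proposition \ref{prop:1} on each of the two summands separately. Set $h(z) := \ell(yz)$; since $y \in \{-1,+1\}$ and $\ell$ is convex, $h : \mathbb{R} \to \mathbb{R}$ is convex as the composition of $\ell$ with a linear map. The vector $\operatorname{pos}(x)$ is element-wise non-negative, so Proposition \ref{prop:1} (with $h$ playing the role of $g$) shows that $w \mapsto \ell(y\langle w, \operatorname{pos}(x)\rangle)$ is supermodular on $\{0,1\}^d$. Symmetrically, $\operatorname{neg}(x) \leq \mathbf{0}$ element-wise, so the same proposition handles the second summand. Since supermodularity is preserved under non-negative linear combinations, the average of the two is supermodular as well.

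The only mild subtlety worth flagging is keeping the two distinct invocations of convexity separate: the first use is Jensen's inequality applied inside the argument of $\ell$ (to produce the lower bound on the average), whereas the second is convexity of the outer function $h(z) = \ell(yz)$ as required by Proposition \ref{prop:1}. Once these roles are cleanly separated, both halves reduce to one-line verifications, and I do not anticipate any serious obstacle beyond bookkeeping.
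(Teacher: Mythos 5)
Your proposal is correct and follows essentially the same route as the paper's own proof: the decomposition $x = \operatorname{pos}(x) + \operatorname{neg}(x)$, Jensen's inequality at the midpoint, the pointwise domination of the 0-1 loss combined with the sign-invariance of the Iverson bracket, and then Proposition~\ref{prop:1} applied to each summand with closure under non-negative combinations. Your explicit treatment of $h(z) = \ell(yz)$ as the convex outer function is a small clarification the paper leaves implicit, but the argument is the same.
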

\begin{proof}
Since $\ell$ is convex, then by the definition of convexity (Def.~\ref{DefinitionConvexity}), we have
\begin{equation*}\small
\begin{aligned}
 &\frac{\ell \left(y \langle w,\operatorname{pos}(x)\rangle\right) + \ell\left(y\langle w,\operatorname{neg}\left(x\right)\rangle \right) }{2} \\ & \geq \ell \left( \frac{y \langle w,\operatorname{pos}(x)\rangle+ y \langle w,\operatorname{neg}(x)\rangle }{2} \right)= \ell\left(\frac{y\langle w,x\rangle}{2}\right) \\ \label{eq:pos_neg}
 &\geq \left[ \frac{y\langle w,x\rangle}{2} \leq 0\right]= \left[ y\langle w,x\rangle \leq 0\right],
\end{aligned}
\end{equation*}
where the definition of convexity implies the first inequality. Then we show the left hand side is supermodular in $w$. Given $\ell(\cdot)$ is a convex with $\operatorname{pos}(x)\geq 0$ and $\operatorname{neg}(x)\leq 0$, both $\ell(y\langle w, \operatorname{pos}(x)\rangle)$ and $\ell(y\langle w, \operatorname{neg}(x)\rangle)$ are supermodular in $w$ (because of Prop.~\ref{prop:1}). The left hand side is a positive linear combinations of supermodular functions, hence it is supermodular in $w$ by closedness properties \cite[Section~2.1]{bach2011learning}.
\end{proof}
\vspace{-0.3cm}

\noindent We now turn to the basic conditions we impose on $w$ such that the loss function can be considered as a real-valued set function. Obviously, this is not a minimal condition: below we show that Prop.~\ref{prop:1} holds for any binary value $w\in \{\alpha,\beta\}^d$ ($\alpha \neq\beta$).

\begin{proposition} \label{prop:alphabeta}
Given $g: \mathbb{R} \rightarrow \mathbb{R}$ is a concave function and $x \geq \mathbf{0}$ or $x \leq \mathbf{0}$ element wise, the optimization for $g(\langle w, x\rangle)$ is also submodular for $w\in \{\alpha,\beta\}^d$ ($\alpha < \beta$).
\end{proposition}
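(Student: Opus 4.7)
The plan is to reduce the $\{\alpha,\beta\}^d$ case directly to the $\{0,1\}^d$ case (essentially the setting of Prop.~\ref{prop:1}) via an affine change of variables on the weights. The key observation is that the bijection $w \leftrightarrow z$ given by $w = \alpha \mathbf{1} + (\beta-\alpha) z$ with $z \in \{0,1\}^d$ identifies any weight vector in $\{\alpha,\beta\}^d$ with a characteristic vector of the subset $S = \{i : w_i = \beta\} = \{i : z_i = 1\}$. Thus the ``set function induced by $g(\langle w, x\rangle)$'' is literally the same set function as the one induced by $\tilde g(\langle z, \tilde x\rangle)$ for appropriately defined $\tilde g$ and $\tilde x$, so submodularity in one formulation is exactly submodularity in the other.

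Concretely, first I would substitute $w = \alpha \mathbf{1} + (\beta - \alpha) z$ to obtain
\begin{equation*}\small
g(\langle w, x\rangle) = g\bigl(\alpha \langle \mathbf{1}, x\rangle + (\beta-\alpha)\langle z, x\rangle\bigr).
\end{equation*}
Since $x$ is fixed, let $c := \alpha \langle \mathbf{1}, x\rangle \in \mathbb{R}$ be a constant and let $\tilde x := (\beta - \alpha) x$. Because $\beta - \alpha > 0$, the scaling preserves signs: $\tilde x \geq \mathbf{0}$ (resp.\ $\leq \mathbf{0}$) whenever $x \geq \mathbf{0}$ (resp.\ $\leq \mathbf{0}$). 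Next, I would define $\tilde g(t) := g(c + t)$; concavity of $g$ is preserved under this affine reparametrization of the argument, so $\tilde g$ is concave on $\mathbb{R}$ (and on $\mathbb{R}_+$ or $\mathbb{R}_-$ as needed).

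At this point we have $g(\langle w, x\rangle) = \tilde g(\langle z, \tilde x\rangle)$ with $z \in \{0,1\}^d$ and $\tilde x$ sign-consistent, which is exactly the hypothesis of the $\{0,1\}$ statement used in Prop.~\ref{prop:1} (applied to the concave/submodular side rather than the convex/supermodular side, via \cite[Proposition~6.1]{bach2011learning}). Invoking that result gives submodularity of $A \mapsto \tilde g(\langle \mathbf{1}_A, \tilde x\rangle)$, and in the case $\tilde x \leq \mathbf{0}$ I would use the same trick as in Prop.~\ref{prop:1}, rewriting $\langle z, \tilde x\rangle = -\langle z, -\tilde x\rangle$ with $-\tilde x \geq \mathbf{0}$ and absorbing the sign into a new concave function.

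The only subtle step is the bookkeeping to convince the reader that submodularity of the set function defined via $S = \{i : z_i = 1\}$ is the same as submodularity of the set function defined via $S = \{i : w_i = \beta\}$; I would state this identification explicitly, since submodularity is a property of the induced set function and the two encodings coincide under the affine bijection. I do not expect any genuine obstacle beyond ensuring the sign-preservation of $\tilde x$ and the concavity of $\tilde g$ are clearly noted.
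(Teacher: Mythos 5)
Your proposal is correct and follows essentially the same route as the paper: both reduce the $\{\alpha,\beta\}^d$ case to the $\{0,1\}^d$ case via the affine substitution $w = \alpha\mathbf{1} + (\beta-\alpha)z$, split off the constant $\alpha\langle\mathbf{1},x\rangle$, absorb the positive factor $\beta-\alpha$ (you put it into $\tilde x$, the paper puts it into the concave function $t\mapsto g(c+(\beta-\alpha)t)$ — trivially equivalent), and then invoke Proposition~\ref{prop:1} / \cite[Proposition~6.1]{bach2011learning}. Your explicit remark that the induced set functions under the two encodings coincide is a point the paper handles only implicitly, but it does not constitute a different argument.
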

\vspace{-0.1cm}
\begin{proof}
 When $w\in \{\alpha,\beta\}^d$, we can express $ g\left(\langle w, x\rangle\right)$ as a set function $g\left(s\left(B\right)\right)$ with $s(B) = \langle w, x\rangle$ related to the set $B=\{i: w_i = \beta \}$. We define a vector $w'$ that satisfies $w_i^{'} =0 $ if $i\in B$ and $w_i^{'} =1 $ if $i \neq B$ and a related set $A=\{i: w_i^{'} = 0 \}$. Thus we have $A=B$
 \begin{equation*}\small
     \begin{aligned}
         s(A) &= \langle w^{'}, x\rangle = \sum_{j\in A}[x]_j \\
         s(B) & =\langle w^{}, x\rangle = \alpha  \sum_{j\in V\backslash A}[x]_j+\beta \sum_{k\in A }[x]_k  \\
         &= \alpha \sum_{j\in V }[x]_j + \left(\beta-\alpha\right) \sum_{k\in A}[x]_k.
     \end{aligned}
 \end{equation*}
 Thus, $g\left(s\left(B\right)\right)$ can be rewritten as:
 \begin{equation*}\small
 \begin{aligned}
     g( s(B)) =g \left( \alpha \sum_{j\in V }[x]_j + \left(\beta-\alpha\right) \sum_{k\in A}[x]_k \right)
 \end{aligned}
 \end{equation*}
 where the first term $\alpha \sum_{j\in V }[x]_j$ can be viewed as a constant $c$. Thus given $g(t): \mathbb{R} \rightarrow \mathbb{R}$ is concave, the function $g  \left(c + \left(\beta-\alpha\right)t\right)$ is also a concave function of $t$. We can apply $x \geq \mathbf{0}$ to the condition in Prop.~\ref{prop:1}, hence the submodularity of $g(\langle w, x\rangle)$ for $w\in \{\alpha,\beta\}^d$.
\end{proof}
\vspace{-0.3cm}

\noindent The logistic loss \eqref{eq:logistic_reg} of the single-layer binary neural network, because of Proposition~\ref{prop:alphabeta}, is supermodular in $w\in \{\alpha,\beta\}^d$. Algorithm~\ref{algNaiveSubmodular} proposed by \cite{6375344} is adopted for minimizing this supermodular loss, which leads to exactly an expected $\frac{1}{2}$-approximation guarantee. Notice that the change in $g(\cdot)$ upon replacing the $i$th element of $w_E$ with $\beta$ is incurred by $\Delta w^Tx = (\beta-\alpha)[x]_i=\mathcal{O}(1)$, leading to a running time $\mathcal{O}(nd)$.
\begin{algorithm}[H]\small
 \SetAlgoLined
\SetKwFor{With}{with}{do}{}
\SetKwInOut{Input}{input}
\SetKwInOut{Output}{output}
\caption{Randomized Supermodular Minimization}
\Input{Dataset $D_n: \mathbb{R}^{d}\times \mathbb{R}$; binary value \{$\alpha$, $\beta$\}; supermodular function: $g(w)$}
\Output{weight vector $w_{E}^{d}(\text{or }w_{F}^{d}) \in \{\alpha,\beta\}^d$}
\BlankLine
\emph{$E \leftarrow \emptyset \text{; } F \leftarrow \Omega$; $w_{E}^0\in \{\alpha\}^d$; $w_{F}^{0}\in \{\beta\}^d$}\\
\For{$i\leftarrow 1$ \KwTo $d$}{
$w_E^{i}[i]\leftarrow \beta $ and $w_F^{i}[i]\leftarrow \alpha$ \\
$a_i = g(w_{E}^{i-1})- g(w_{E}^i)$, $b_i = g(w_{F}^{i-1}) -g(w_{F}^i)$\\
$a_i^{'} = \max(0, a_i) \text{, } b_i^{'} = \max(0, b_i)$\\
\With{probability $\frac{a^{\prime}_i}{a^{\prime}_i+b^{\prime}_i}$}{
$w_F^i \leftarrow w_F^{i-1}$
} \Other{  $w_E^i\leftarrow w_E^{i-1}$ }
}
\label{algNaiveSubmodular}
\end{algorithm}
\vspace{-0.5cm}

\subsubsection{Learning ternary and low-bit width weights}
\begin{proposition} [Ternary weights] \label{Prop:ternaryweights}
Given an input vector $x\in \mathbb{R} ^d$, if $g : \mathbb{R} \rightarrow \mathbb{R}$ is a convex function and $x \geq \mathbf{0}$ or $x \leq \mathbf{0}$ element wise, We define $w = w_1-w_2 \in \{-1,0,1\}^d$ with $w_1, w_2\in\{0,1\}^d$, then 
$$\hat{g}(\langle w, x\rangle) = \frac{g(\langle w_1, 2x\rangle) + g(\langle w_2, -2x\rangle)}{2} $$
is a convex upper bound of $g(\langle w, x\rangle)$ that is supermodular in $w_1, w_2\in\{0,1\}^d$. 
\end{proposition}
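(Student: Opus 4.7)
The plan is to check three ingredients: (i) convexity of $\hat{g}$ viewed as a function of $(w_1,w_2)\in\mathbb{R}^d\times\mathbb{R}^d$; (ii) the upper bound inequality $\hat{g}(\langle w,x\rangle) \geq g(\langle w,x\rangle)$; (iii) supermodularity when $(w_1,w_2)$ is restricted to $\{0,1\}^d\times\{0,1\}^d$. Each piece follows a template already used in the paper for Prop.~\ref{prop:mixedX}, so the proof will be quite short.

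For (i), observe that $(w_1,w_2)\mapsto \langle w_1,2x\rangle$ and $(w_1,w_2)\mapsto \langle w_2,-2x\rangle$ are linear, hence the compositions $g(\langle w_1,2x\rangle)$ and $g(\langle w_2,-2x\rangle)$ are convex since $g$ is convex. A nonnegative combination of convex functions is convex, so $\hat{g}$ is convex. For (ii), set $a=\langle w_1,2x\rangle$ and $b=\langle w_2,-2x\rangle$; then $\tfrac12(a+b)=\langle w_1-w_2,x\rangle=\langle w,x\rangle$, and Definition~\ref{DefinitionConvexity} gives
$$g(\langle w,x\rangle) \;=\; g\!\left(\tfrac{a+b}{2}\right) \;\leq\; \tfrac{g(a)+g(b)}{2} \;=\; \hat{g}(\langle w,x\rangle),$$
which is exactly the claimed upper bound.

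For (iii), consider the case $x\geq\mathbf{0}$; the case $x\leq\mathbf{0}$ is symmetric. Then $2x\geq\mathbf{0}$, so by Prop.~\ref{prop:1} the map $w_1\mapsto g(\langle w_1,2x\rangle)$ is supermodular on $\{0,1\}^d$. Similarly $-2x\leq\mathbf{0}$, so $w_2\mapsto g(\langle w_2,-2x\rangle)$ is supermodular on $\{0,1\}^d$. To combine the two summands I take the ground set to be the disjoint union of two copies of $\{1,\dots,d\}$, one indexing $w_1$ and one indexing $w_2$; a function on subsets of $\{0,1\}^d$ extended with trivial dependence on the other block remains supermodular (the diminishing-returns inequality of Def.~\ref{DefinitionSubmodularity} holds trivially whenever the added element lies in the inert block, and reduces to the original inequality otherwise). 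Supermodularity is closed under nonnegative linear combinations (Prop.~\ref{prop:mixedX} already invokes this from \cite[Section~2.1]{bach2011learning}), so $\hat{g}$ is supermodular in $(w_1,w_2)$.

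The only step requiring any care is the joint supermodularity in (iii): one has to be explicit that the two ``supermodular in $w_1$'' and ``supermodular in $w_2$'' statements combine into a single statement on the doubled ground set. Everything else is a direct reuse of Prop.~\ref{prop:1} and the same convexity trick that produced the upper bound in Prop.~\ref{prop:mixedX}, so I expect no serious obstacle.
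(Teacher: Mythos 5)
Your proposal is correct and follows essentially the same route as the paper: the upper bound comes from the midpoint convexity inequality applied to $a=\langle w_1,2x\rangle$ and $b=\langle w_2,-2x\rangle$, and supermodularity of each summand comes from Prop.~\ref{prop:1} (with $2x$ and $-2x$ respectively). The only difference is that you spell out the joint supermodularity on the doubled ground set, a detail the paper's one-line proof leaves implicit.
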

\begin{proof}
Applying Definition~\ref{DefinitionConvexity} of convexity, we have 
\begin{equation*}\small
\begin{aligned}
   \underbrace{g\left( \frac{\langle w_1, 2x\rangle - \langle w_2, 2x\rangle }{2}\right)}_{g(\langle w, x\rangle)} & \le  \underbrace{\frac{g(\langle w_1, 2x\rangle) + g(\langle w_2, -2x\rangle)}{2}}_{\hat{g}(\langle w, x\rangle)}
\end{aligned}
\end{equation*}
where Prop.~\ref{prop:1} implies the right hand side is supermodular in $w_1, w_2\in\{0,1\}^d$.
\end{proof}
\vspace{-0.5cm}

\noindent Proposition~\ref{Prop:ternaryweights} implies that one can construct a supermodular upper bound for ternary weights which can be viewed as the difference between two binary vectors $w_1, w_2\in\{\alpha,\beta\}^d$. When fixing $w_1$, the loss function~\eqref{eq:logistic_reg} is supermodular in $w_2$, and vice versa. Thus, coordinate descent can be incorporated with Algorithm~\ref{algNaiveSubmodular} to learn this weight. Moreover, we could further split $w_1$ into $w_{11}$, $w_{12} \in \{\alpha,\beta\}^d$, thus quantizing the models with more bits.

\section{Multi-Layer Networks}
In this section, we summarize the construction of a convex upper bound of the loss function for the multi-layer networks. In particular, we are interested in the case when this upper bound is supermodular such that it can be solved efficiently via various supermodular minimization algorithms.
\vspace{-0.2cm}
\subsection{Two-Layer Networks} \label{Section:2_layer_NN}
\vspace{-0.2cm}
For the bias-free two-layer fully connected feedforward neural network with a ReLU activation in Section \ref{section: network_model}, it aims to minimize the empirical risk with respect to the weight matrix $W$:
\begin{align}
    \ell(y,f(X)) = \sum_{i=1}^n \log\left(1+\exp\left(-y_if\left(x_i\right)\right)\right) \label{eq:logisticLoss}
\end{align}
where $\ell(\cdot)$ is taken as logistic loss. Based on Equation~\eqref{eq:2_layer_NN} and \eqref{eq:logisticLoss}, when fixing $a$, we can view the loss function as a set function of the entries of $W$. In particular, we are interested in the loss for observation $(x_i,y_i)$ with respect to the weights $w_k$ corresponding to the neuron $k$ while keeping other weights in the first layer being fixed:
\begin{equation}\small \label{eq:twoLayerLoss}
\begin{aligned}
    L(S_k) & 
      = \underbrace{\log\left(1+\exp\left(-y_i a_k \sigma\left( \langle  w_k, x_i \rangle\right) -  C   \right)\right) }_{=: g(\langle  w_k, x_i \rangle)} 
\end{aligned}
\end{equation}
which can be viewed as a real-value set function of a set $S_k =\{i: [w_k]_i=\beta\}$ with a constant $C = y_i \sum_{j \neq k }^{h} a_j \sigma\left(\langle  w_j, x_i \rangle\right)$. For the sake of simplicity, we assume $x_i$ is non-negative, thus the characteristic of this function can be analyzed by considering the two following two cases.
\begin{inparaenum}[(i)]\small
    \item $y_ia_k \leq 0:$  $g(t)=\log\left(1+\exp\left(p\max\left(0,t\right)\right)\right)$($p>0$) is convex in $t$. Then by directly applying Prop.~\ref{prop:alphabeta}, this loss function is supermodular for $w_k\in \{\alpha, \beta\}^{d}$.
    \item $y_ia_k > 0:$ $g(t)=\log\left(1+\exp\left(p\max\left(0,t\right)\right)\right) (p<0)$ is not convex due to the flattened part where $t<0$. As depicted in Fig.~\ref{fig:approximateLoss},  we can replace the non-convex flat portion of the loss curve with a convex upper bound $\tilde{g}(t)$: the tangent line at $t=0$ (red line) or the function without ReLU activation (blue line), which leads to supermodularity for $w_k\in \{\alpha, \beta\}^{d}$ from Prop.~\ref{prop:1}.
\end{inparaenum}

\begin{figure}[H]
\vspace{-0.2cm}
\begin{center}
\includegraphics[width=0.23\textwidth]{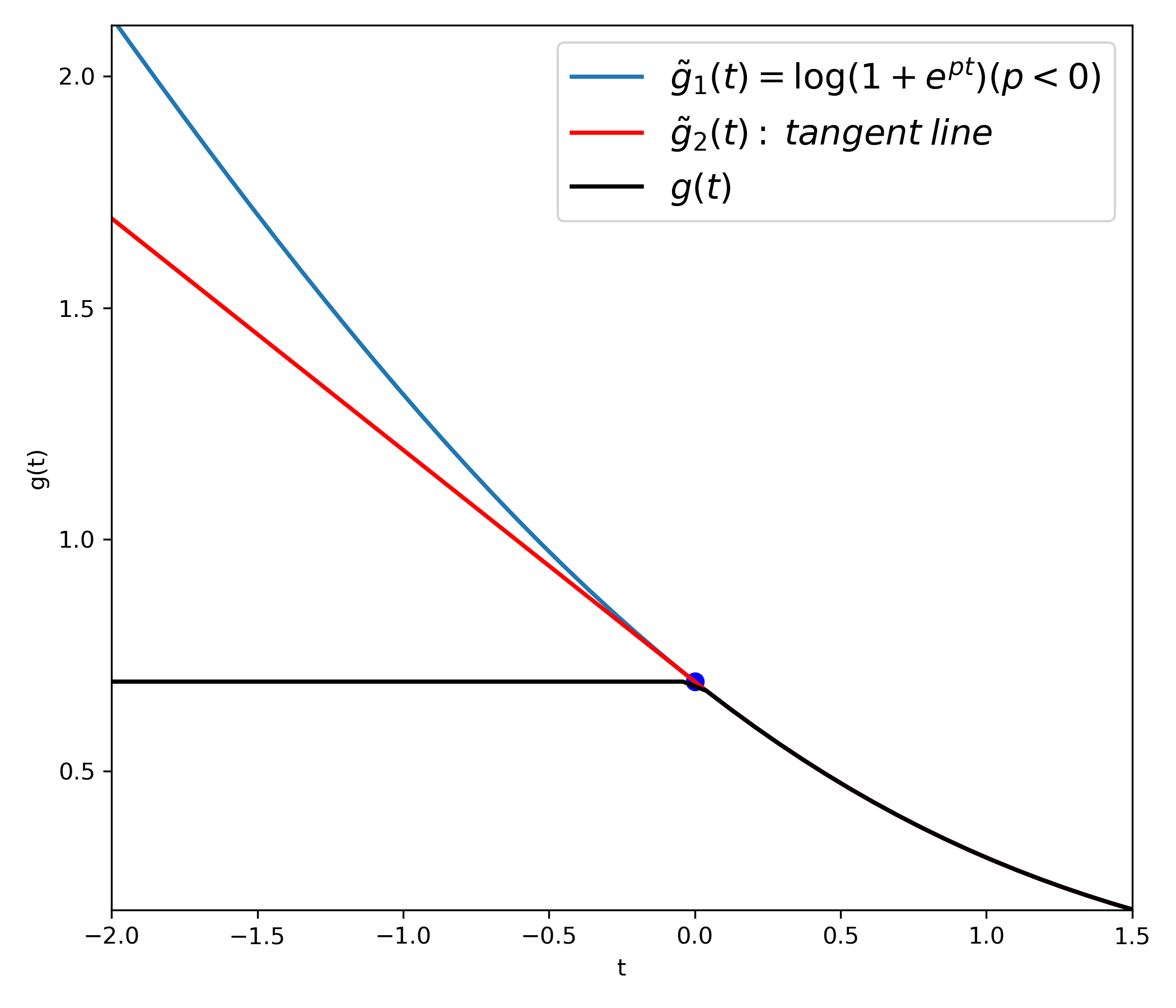}
\caption{Two convex upper bounds for $g(t)=\log\left(1+\exp\left(p\max\left(0,t\right)\right)\right) (p<0)$.}
\label{fig:approximateLoss}
\end{center}
\vspace{-0.6cm}
\end{figure}

\noindent Then the approximated empirical risk with $\tilde{g}(t)$, as a sum of supermodular functions $G(w_k) = \sum_{i=1}^{n} \tilde{g}(\langle  w_k, x_i \rangle)$, is thus supermodular in $w_k$. Coordinate descent can be incorporated with supermodular optimization in order to optimize each row of the weight matrix $W$ (Algorithm ~\ref{algo_disjdecomp}, supplementary material). Likewise, Greedy coordinate descent can be extended in the same way without imposing supermodularity on the objective function. Thus the empirical risk can be used as the objective function instead.
\vspace{-0.2cm}
\subsection{Deep Neural Networks} \label{Section:deepNeuralNetworks}
\vspace{-0.2cm}
Given the input of one conv layer $l$, we can represent a neural network with L hidden layers by a series of functions of the form:
\begin{equation}
\begin{aligned}
        \mathbf{y}_{l+1} &= \sigma(W_l^T\mathbf{y}_l+b_l) \text{ for } l\in 1\cdots L-1\\
        f(x) &= a_L\mathbf{y}_L +b_L
\end{aligned}
\end{equation}
where $\mathbf{y}_l $, the output from all previous layers to layer $l$, is a $k_{l}^2c_{l}$-by-1 vector that represents co-located $k_l\times k_l$ pixels in $c_l$ channels. With $n_l = k_{l}^2c_{l}$ denoting the number of connections of a response, $W_{l} \in \{\alpha, \beta\}^{n_{l+1}\times n_{l}}$ is a $n_{l+1}$-by-$n_{l}$ matrix, where $n_{l+1}$ is the number of filters and each row of $W_{l}$ represents the weights of a filter. The central idea of our approach is to approximate $f(x)$ via a two-layer neural network with layer $l$ as the first layer
\begin{equation}
\begin{aligned}
    \hat{f}(x) &= \hat{a} \sigma \left(W_{l}  \mathbf{y}_l+b_l \right) + \hat{b} \label{eq:linearization}
\end{aligned}
\end{equation}
where $\hat{a} = \frac{\partial f(x)}{\partial \mathbf{y}_{l+1}} $ is the gradient of $f(x)$ with respect to $\mathbf{y}_{l+1}$ at observation $x$. A bias term $\hat{b}=f(x)-\hat{a}\sigma \left(W_{l}  x_l+b_l\right)$ is estimated based on the difference between the true output and the approximate output. Then, we can use the approach in Section \ref{section: network_model} to find a convex surrogate for the empirical risk. Finally, a layerwise training strategy is applied to optimize the weights of the network (Algorithm~\ref{algoMultiLayerNetworks}, supplementary material). Given $n$ is the sample size, $L$ is the depth of the network, and $|W|$ is the number of weights in total, the time complexity for this algorithm is: $\mathcal{O}(nL^2+n|W|)$. The first term $nL^2$ measures the cost of linear approximation of the network using the gradient: linearization~\eqref{eq:linearization} is performed on all $(L-1)$ layers and all $n$ samples. Once we have the linearization (Algorithm~\ref{algo_disjdecomp}, supplementary material), the marginal gain of every entry of $W_i$ gives a cost of $n|W|$. Notice that here we ignore the cost of optimization for the last layer via stochastic average gradient. Likewise, GCD can also be extended in the same way, leading to a cost of $\mathcal{O}(nL |W|)$.

\vspace{-0.2cm}
\section{Experimental Study}
\vspace{-0.2cm}
\noindent \textbf{[Single-layer Network on MNIST]} In our first experiment, we used the single-layer network with logistic loss as the binary classifier to detect the first three digits $0-2$ among the first six digits $0-5$ of the MNIST dataset. We set $\beta=-\alpha=0.5$ for RSM.

\vspace{-5pt}
\begin{table}[H]\small
\caption{Comparison of the performance of RSM for low-bit width weights, and stochastic average gradient (SAG) for full precision weights. This series of experiments was re-run 20 times with different seeds. The loss and accuracy are measured on the testing set. The standard deviation of the loss and accuracy shows the randomness of RSM. RSM for high-bit weights, as expected, outperforms networks with low-bit weights and underperforms networks with full-precision weights.}
\centering
\begin{tabular}{@{}c|c|c@{}}
\hline
Bit-width      & Accuracy(\%) & Loss \\ \hline
1              &     $77.7\pm 4.36$          &   $0.602\pm 0.0161$   \\
2              &    $88.7\pm 1.29$                 &   $0.556\pm 5.92\times 10^{-3}$    \\
3              &      $89.1\pm 0.831$            &    $0.559\pm 6.69\times 10^{-3}$  \\
Full precision &     $95.2\pm 0.0109$           &    $0.361\pm 5.92\times 10^{-5}$    \\ \hline
\end{tabular}
\label{tb:singlelayer}
\end{table}
\vspace{-3pt}

\begin{figure}[h]\footnotesize
\vspace{-0.3cm}
\begin{minipage}[H]{0.33\linewidth}
\centering
\includegraphics[width=1in]{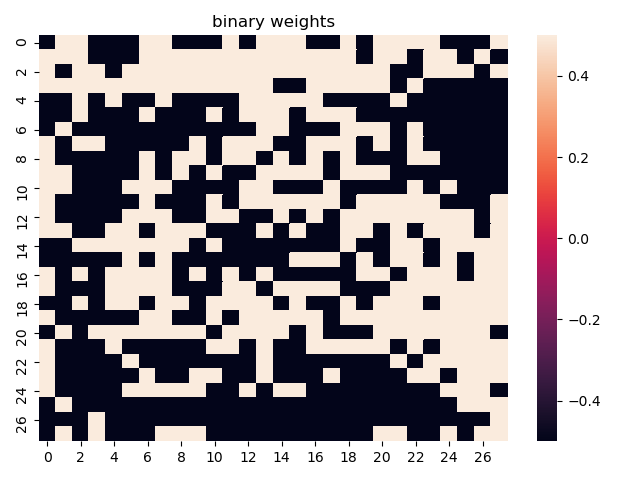}
\caption*{(a) 1 bit}
\end{minipage}%
\begin{minipage}[H]{0.33\linewidth}
\centering
\includegraphics[width=1in]{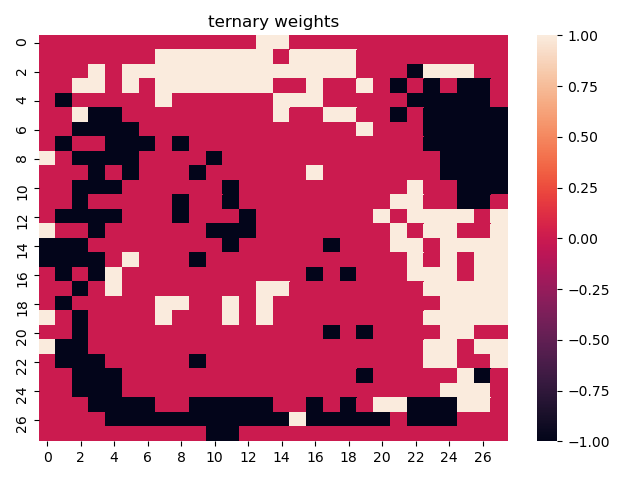}
\caption*{(b) 2 bits}
\end{minipage}
\begin{minipage}[H]{0.33\linewidth}
\centering
\includegraphics[width=1in]{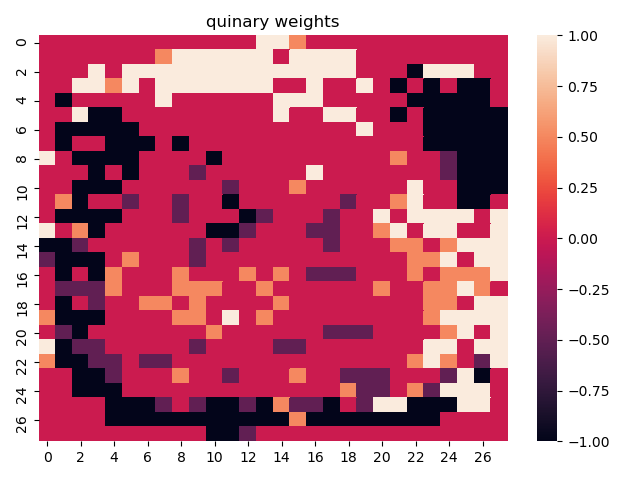}
\caption*{(c) 3 bits}
\end{minipage}
\\
\begin{minipage}[H]{0.32\linewidth}
\centering
\includegraphics[width=1in]{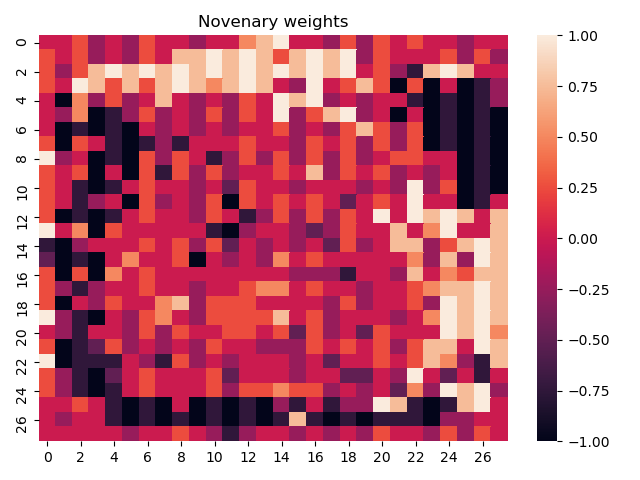}
\caption*{(d) 5 bits}
\end{minipage}
\begin{minipage}[H]{0.33\linewidth}
\centering
\includegraphics[width=1in]{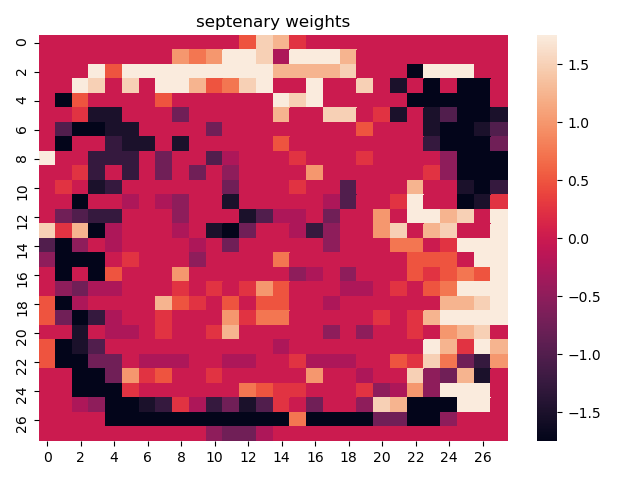}
\caption*{(e) 9 bits}
\end{minipage}
\begin{minipage}[H]{0.33\linewidth}
\centering
\includegraphics[width=1in]{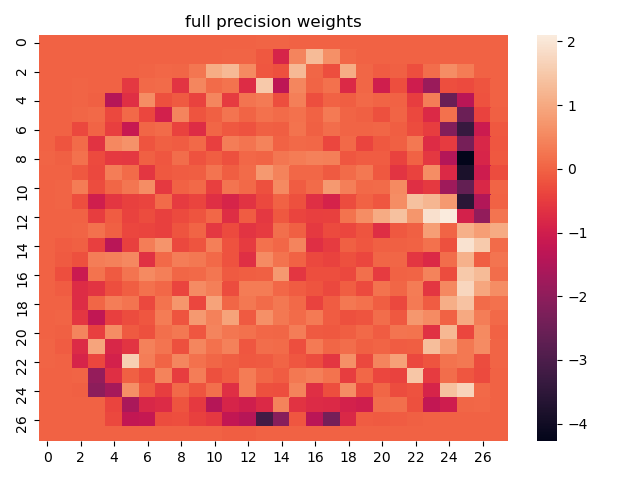}
\caption*{(f) Full precision}
\end{minipage}
\caption{Visualization of varying bit-width weight vectors for single layer network}
\label{fig:visulizeWeights}
\end{figure}

\noindent \textbf{[MLP on MNIST]} We apply our proposed methods on several networks, including a 2-layer fully connected network (FC2) with 100 hidden units, a 3-layer fully-connected network (FC3), a CNN6-d network~\cite{zhu2020r}, and a LeNet5~\cite{lecun1989backpropagation} with logistic loss as the binary classifiers to detect the first five digits among all the ten digits of the MNIST dataset. All these models are trained from scratch with weights that were initialized using the technique described in \cite{he2015delving}: $\beta=-\alpha=\sqrt{\frac{2}{n_l}}$. In addition, we implemented a hybrid learning strategy of these two approaches that uses GCD on the first half of the layers, and RSM on the second half.
\begin{table}\footnotesize
\caption{Comparisons of the performance on different model architectures using GCD, RSM, and a hybrid of them on MNIST dataset. }
\centering
\begin{tabular}{@{}c|c|c|c|c@{}}
\hline
                  & Model  & Acc.(\%) & Loss & Time(s)/Iter \\ \hline \hline
\multirow{4}{*}{\begin{tabular}[c]{@{}c@{}}Greedy \\ Coordinate \\ Descent\end{tabular}} & FC2    &    94.2 & 0.165      &     83         \\ & FC3    &  96.8       & 0.135     &  20728            \\ & CNN6   &    96.4      &0.100 & 8339              \\ & LeNet5 &   96.5       &0.0987      &     172         \\ \hline \hline
\multirow{2}{*}{Hybrid}                                                                  & CNN6   &    94.4      &0.145      &4525              \\ & LeNet5 &      96.0    &0.110      &102              \\ \hline \hline
\multirow{4}{*}{\begin{tabular}[c]{@{}c@{}}Supermodular \\ \\ Minimization\end{tabular}} & FC2    &    74.9      &  0.514    &       475       \\ & FC3    &   85.1       &0.373      &7298              \\ & CNN6   &   88.6       &0.289      & 1438             \\ & LeNet5 &   82.7       &0.384      & 258             \\ \hline
\end{tabular}
\label{tb:resultsMNIST}
\end{table}
\footnotetext[1]{The Acc. in the table refers to classification accuracy on the testing set.}
\footnotetext[2]{The time in the Tables was measured on Intel Core i7-7820X CPU with NVIDIA Corporation GP102 [GeForce GTX 1080 Ti] GPUs.}

The results in Table \ref{tb:resultsMNIST} shows that RSM fails to attain comparable accuracy with binary weights especially for small models i.e.\ FC2. But it can be more computationally efficient than GCD, especially for large models with millions of parameters i.e.\ CNN6 and FC3.  Comparing the results of hybrid training to those of GCD and RSM, we see that the lower performance of RSM optimization can be at least in part attributed to the increased non-convexity of earlier layers, meaning that the supermodular approximation less closely fits the true loss landscape.  However, for the second half of the layers, RSM performs on par with GCD, indicating a comparatively good fit with accompanying computational benefits. This hybrid approach combines the strengths of these two optimization approaches, and provides a trade-off between model performance and time consumption.\\
\noindent \textbf{[MLP on CIFAR10]} We also used the LeNet5 and CNN6 models for a binary classification task where the two classes are taken as the union of classes $0-2$ and $3-5$ from the CIFAR10 dataset, respectively. Between these two classes, we have 30,000 $32 \times 32 \times 3$ images for training and 6000 for testing. Using gradient descent, the CNN6 model with full-precision weights achieves an accuracy of $80.4\%$ with a loss $0.422$ on the testing set while the full-precision LeNet5 model attains an accuracy of $80.9\%$ with a loss $0.409$ on this task.

\begin{table}[H]\scriptsize
\caption{Testing performance using GCD, RSM with binary and ternary weights on CIFAR10 dataset.}
\begin{tabular}{l|c|c|c|c|c}
\hline
Model                   & Bit                      & Method & Acc.(\%) & Loss & Time(s)/Iter \\ \hline
\multirow{4}{*}{CNN6}   & \multirow{2}{*}{Binary}  & GCD          &   77.4   &   0.473   &   4216             \\
                        &                          & RSM       &   71.9       &   0.550   &  445             \\ \cline{2-6} 
                        & \multirow{2}{*}{Ternary} & GCD  &     78.7     &  0.456    &    6315               \\
                        &                          & RSM  & 72.5      &   0.540    &     477             \\ \hline
\multirow{4}{*}{LeNet5} & \multirow{2}{*}{Binary}  & GCD          &   74.5   &   0.526   &   114                    \\
                        &                          &  RSM       &   69.4       &   0.585   &  213           \\ \cline{2-6} 
                        & \multirow{2}{*}{Ternary} & GCD          &     74.9     &  0.502    &    165                \\
                        &                          &  RSM         &     69.9      &   0.577    &     386              \\ \hline
\end{tabular}
\label{tb:resultsCifar10}
\end{table}

\noindent The above results show that ternary weights slightly improve the model performance in terms of loss and accuracy compared with binary weights. And for deeper models i.e. CNN6, RSM provides a cheaper way to attain a comparable loss to GCD.
\vspace{-3pt}
\section{Discussion and Conclusions}
\vspace{-3pt}
 Our work has made the first step towards supermodular optimization on low bit-width neural networks and provided a framework to learn the network in a layerwise fashion with coordinate descent. For risk minimization, we build on non-monotone supermodular minimization to develop an algorithm with a global approximation guarantee (for each optimization step). We presented the randomized supermodular optimization and greedy coordinate descent for learning single and multi-layer neural networks, with extensive experiments on binary classification tasks. Greedy coordinate descent could reach comparable results to full precision but at the expense of high time complexity, especially for models with millions of parameters. However, supermodular optimization fails to attain comparable results in some cases. The hybrid of these two approaches has shown its excellent performance to attain a binary model from scratch in a resource- and time-efficient way. We also notice that coordinate descent is not an obvious candidate for massive parallelism, and it potentially relies on the initial state of the model parameters. Thus, if we could start from a well-initialized model, the performance of our approaches might be further improved. 
 
 The mapping from the general non-convex optimization landscape of deep neural network training to a convex layer-wise objective has its parallels in continuous optimization strategies \cite{pmlr-v97-belilovsky19a,NIPS2006_5da713a6}.  Convexification and layerwise optimization are themselves highly interesting and relevant areas of research that benefit both discrete and continuous optimization strategies.

\nocite{ex1,ex2}
\bibliographystyle{latex12}
\bibliography{reference.bib}

\newpage
\section*{Algorithms} \label{sec:supplements}
 
 \begin{algorithm}[H]\small
\SetKwFor{With}{with}{do}{}
\SetKwInOut{Input}{input}
\SetKwInOut{Output}{output}
\caption{\text{Greedy Coordinate Descent (GCD)}}
\Input{Dataset $D_n: \mathbb{R}^{d}\times \mathbb{R}$; weight vector $w^0 =\{\alpha,\beta\}^d$; objective function: $g(w)$}
\Output{weight vector $w^{d} \in \{\alpha,\beta\}^d$}
\BlankLine
\For{$i\leftarrow 1$ \KwTo $d$}{
\eIf{$w^{i-1}[i] = \alpha$ }{ 
      $w^{i}[i] = \beta$   }
      {$w^{i}[i] = \alpha$}
      \If{$g(w^{i-1})< g(w^{i})$}{$w^{i}\leftarrow w^{i-1}$ }
}
\label{algForwardGreedy}
\end{algorithm}

\begin{algorithm}[H]\small
\SetKwData{Left}{left}
\SetKwData{This}{this}
\SetKwData{Up}{up}
\SetKwFor{With}{with}{do}{endwith}
\SetKwFunction{FindCompress}{FindCompress}
\SetKwInOut{Input}{input}
\SetKwInOut{Output}{output}
\caption{Two-layer RSM ( or GCD)}
\KwIn{Training set: $D_n$; Weight of $2$nd layer: $a$; Iteration: $n\_iter$; Supermodular (Objective) function: $g(w)$}
\KwOut{$W =[w_1,\cdots,w_h]\in\{\alpha, \beta \}^{h\times d} $}
\For{$s = 1, \cdots, n\_iter$}{
\For{$i = 1, \cdots, h$}{
$w_i^{s} \leftarrow$ via RSM (or GCD) while keeping $w_j^{s}$ or $w_j^{s-1}$ ($j\neq i $) being fixed \\
\If{ $\operatorname{diff}= G(w_j^{s})- G(w_j^{s-1})>0$ \tcp*{ Accept-Reject} }{
\With{probability $1-\operatorname{sigmoid}(\frac{\operatorname{diff}}{0.05})$}{$w_j^{s}\leftarrow w_j^{s-1}$}
}
}
update $a^s\in\mathbb{R}^{h}$ via stochastic average gradient (SAG)
}
\label{algo_disjdecomp}
\end{algorithm}

\begin{algorithm}[h]\small
\vspace{-0cm}
\setlength{\abovecaptionskip}{0.05 cm}
\setlength{\belowcaptionskip}{0.05cm}
\SetKwFor{With}{with}{do}{endwith}
\caption{Multi-layer RSM (or GCD)}
\KwIn{Training set:$D_n$; Iteration: $n\_iter$}
\KwOut{$W_1,\cdots W_{L-1} $}
\BlankLine
\For{$s = 1, \cdots, n\_iter$}{
\For{$i = 1, \cdots, L-1$  }{
update $W_i^{s}$ via Algorithm~\ref{algo_disjdecomp}
}
}
\label{algoMultiLayerNetworks}
\end{algorithm}
\vspace{-0.2cm}

\end{document}